\DeclareMathOperator*{\argmax}{arg\,max}
\newtheorem{theorem}{Theorem}
\begin{document}

\title{Pretrain Soft Q-Learning with Imperfect Demonstrations}

\author{\name Xiaoqin Zhang \thanks{These authors contributed equally to this work.} \email xiaoqin-15@mails.tsinghua.edu.cn\\
       \name Yunfei Li \footnotemark[1] \email l-yf16@mails.tsinghua.edu.cn\\
       \name Huimin Ma \thanks{Corresponding author.} \email mhmpub@tsinghua.edu.cn \\
       \addr Department of Electronic Engineering \\Tsinghua University \\Beijing, China
       \AND
       \name Xiong Luo \email xluo@ustb.edu.cn \\
       \addr School of Computer and Communication Engineering \& Institute of Artificial Intelligence \\University of Science \& Technology Beijing \\Beijing, China
       }


\maketitle

\begin{abstract}
Pretraining reinforcement learning methods with demonstrations has been an important concept in the study of reinforcement learning since a large amount of computing power is spent on online simulations with existing reinforcement learning algorithms. 
Pretraining reinforcement learning remains a significant challenge in exploiting expert demonstrations whilst keeping exploration potentials, especially for value based methods. 
In this paper, we propose a pretraining method for soft Q-learning. Our work is inspired by pretraining methods for actor-critic algorithms since soft Q-learning is a value based algorithm that is equivalent to policy gradient. The proposed method is based on $\gamma$-discounted biased policy evaluation with entropy regularization, which is also the updating target of soft Q-learning. Our method is evaluated on various tasks from Atari 2600. Experiments show that our method effectively learns from imperfect demonstrations, and outperforms other state-of-the-art methods that learn from expert demonstrations.
\end{abstract}

\section{Introduction}
\label{Introduction}

  Reinforcement Learning (RL) solves Markov Decision Process (MDP) problems based on exploration and evaluation. 
  The development of deep neural networks further enables RL algorithms to master complex sequential control problems \cite{mnih2015human,silver2016mastering}.
  Mainstream model free deep RL methods include policy based \cite{sutton2000policy,lillicrap2015continuous,mnih2016asynchronous} and value based methods \cite{mnih2015human,hessel2017rainbow}, and both 
  have made significant progress on various tasks.   
  
  However, model free RL methods typically consume dramatic amount of interaction with the environment for trial-and-error \cite{sutton2018reinforcement} before a meaningful policy is grasped. The low sample efficiency makes them computational expensive to train.
  Take AlphaZero \cite{silver2018general} as an example: up to 5000 first-generation TPUs are used for online simulations. 
  To address sample efficiency challenge, it is useful to effectively learn from datasets obtained from experts, hence decrease the number of simulations, and save training time. 
  
  Some existing work focuses on making use of expert trajectories. DQfD and POfD \cite{hester2017deep,kang2018policy} are based on inverse reinforcement learning (IRL), Lakshminarayanan et al. and Nachum et al. \cite{lakshminarayanan2016reinforcement,nachum2017bridging} train value based RL algorithms using behavior cloning (BC) losses.  
  
  These methods rely on the perfection of demonstrations to achieve good performance. However, expert demonstrations are not guaranteed to be perfect for a given task because optimal policy can be hard to obtain in real world. 
  
  Furthermore, expert demonstrations may be different from online data in that reward signals are likely to be missing because it is difficult to define reward functions in real world applications such as driving.
  In this paper, reward signals are excluded from expert demonstrations, which makes this work distinctive from those utilizing demonstrations with reward.
  

  Zhang and Ma \cite{zhang2018pretraining} successfully pretrain one of the policy based RL algorithms, i.e. actor-critic, utilizing imperfect expert demonstrations without reward signals, 
  but such method is not directly compatible with value based RL methods. However, soft Q-learning as a special case of value based RL enjoys a theoretically good property of being equivalent to policy gradient methods \cite{schulman2017equivalence}. Inspired by this equivalence, we could extend policy based pretrain method to value based soft Q-learning in this work.
  
  In this paper, we give the first method that pretrains one of the value based RL methods, soft Q-learning, with imperfect and reward-missing demonstrations. 
  The main contributions of this paper are:
  
  \begin{itemize}
  	
  	\item decoupling policy function and value function in the policy evaluation of soft Q-learning to update policy and value functions respectively with demonstrations.
  	
  	\item expediting learning with expert demonstrations whilst placing no restrictions on asymptotic performance despite the imperfection of demonstrations. 
  \end{itemize}
  
  Our proposed pretrain method is evaluated on various environments from Atari 2600 games. Experiments show that our method effectively prepares RL for further increase when pretrain finishies and improves sample efficiency by learning from imperfect demonstrations, and outperforms both IRL and BC methods.

\section{Related Work}
Some recent work focuses on making use of expert trajectories. For policy based methods, Kang et al. \cite{kang2018policy} use GANs to imitate experts. Their work is based on IRL and aims at learning from imperfect demonstrations in environments where reward signals are sparse and rare. The first published version of AlphaGo \cite{silver2016mastering}, Nair et al. \cite{nair2018overcoming} and Rajeswaran et al. \cite{rajeswaran2017learning} apply BC methods to learn from expert demonstrations, and train policy functions as classification or regression tasks. These methods focus on mimicking demonstrations and rely on the perfection of demonstrations to achieve good performance.

DDPGfD \cite{vecerik2017leveraging} adds expert demonstrations to replay buffers of online trajectories and learns with modified DDPG losses, but demonstrations used in this work are trajectories with reward signals, which is a different setting from our work. 

Zhang and Ma \cite{zhang2018pretraining} pretrain the actor-critic networks using policy based gradients. It succeeds in warming up actor-critic RL algorithms with imperfect demonstrations, but it is incompatible for value based methods.

For value based methods,
DQfD \cite{hester2017deep} learns from expert demonstrations via IRL, with the assumption that experts are the global optimum. DQfD is obtained from a large margin IRL constraint \cite{piot2014boosted} and learns from demonstrations by assuming the experts are optimal.
Lakshminarayanan et al. \cite{lakshminarayanan2016reinforcement} train DQN with expert demonstrations using BC, by applying cross-entropy loss to Q networks, to update implicit policies of Q-learning, therefore BC is one of the heuristic methods to introduce expert demonstrations. 

Brys et al. \cite{brys2015reinforcement} propose a method to learn from demonstrations using reward shaping. They propose a potential function that encourages policies to learn from demonstrations and not to disturb the optimal policy of the system. However, the potential function of the method is defined to search the whole demonstration dataset each time it is called. Consequently, the method cannot scale to tasks with high-dimensional state spaces and large demonstration datasets.

Nachum et al. \cite{nachum2017bridging} also introduce expert trajectories to soft Q-learning process using Behavior Cloning losses, which have better results than original soft Q-learning. Since soft Q-learning is equivalent to policy gradient methods, an explicit policy function is provided with Q functions, and Behavior Cloning methods can train the explicit policy function with expert demonstrations.

\section{Preliminaries}
\subsection{Soft Q-Learning}
Soft Q-learning is a kind of Q-learning that augments the standard value function with a discounted entropy \cite{haarnoja17reinforcement,nachum2017bridging} or KL divergence \cite{schulman2017equivalence} regularizer to encourage exploration, then the training loss based on Bellman equation is differentiable. For convenience, we only use entropy regularizer in this paper, but the results of our analysis can be extended to value functions with KL divergence regularizer. 

In this paper, we define the trajectory $\tau:=\{s_{1:\infty},a_{0:\infty}\}$, and $\tau_t :=\{s_{t+1:\infty},a_{t:\infty}\}$, entropy function $h^\pi_s$ $=H(\pi(\cdot|s))$ $=-\sum_a\pi(a|s)\log \pi(a|s)$, and $r^\pi_t = r_t + \epsilon h^\pi_{s_t}$.

With entropy regularizer, the Q function is defined as

\begin{equation}\label{biasq}
Q^{\pi}(s_t,a_t) = \mathbb{E}_{\tau_{t+1}, s_{t+1}\sim \pi}\left[r_0 + \sum_{l=1}^{\infty}{\gamma^lr^{\pi}_{t+l}}\right],
\end{equation}

the value function is $V^\pi(s) = \mathbb{E}_{a\sim\pi}Q^\pi(s,a)$, and the advantage function is $A^\pi(s_t,a_t) = Q^\pi(s_t,a_t)-V^\pi(s_t)$. The Bellman equation of an MDP with entropy regularization is

\begin{equation*}
Q^*(s,a) = \mathbb{E}_{s',r|s,a}\Big[r + \left.\gamma\max_{\pi}\left({\mathbb{E}_{a'\sim \pi}\left[Q^\pi(s',a')\right]}+\epsilon h^\pi_{s'}\right)\right].
\end{equation*}

Not that $\mathbb{E}_{a'\sim \pi}\left[Q^\pi(s',a')\right] +\epsilon h^\pi_{s'}$ can be regarded as a scaled KL divergence between $\pi$ and $\pi^*$, where $\pi^*$ is the optimal policy for the MDP with entropy regularization. A differentiable version of Bellman equation can then be provided:

\begin{equation*}
Q^*(s,a) = \mathbb{E}_{s',r|s,a}\left[r+\gamma\epsilon\log\sum_{a'} \exp \left(Q^*(s',a')/\epsilon\right) \right].
\end{equation*}

In the learning process of soft Q-learning, Q functions are estimated with neural networks. Suppose the parameters of the estimated Q function are $\theta$, and define
\begin{equation*}
y_t = r_t+\gamma\epsilon\log \sum_{a}\exp\left(Q_\theta(s_{t+1},a)/\epsilon\right),
\end{equation*}
the loss of soft Q-learning is

\begin{equation*}
L(Q_\theta) = \mathbb{E}_{s_t,a_t,t}\left[\frac{1}{2}\left(Q_\theta(s_t,a_t)-y_t\right)^2\right],
\end{equation*}
which is differentiable as well.

Schulman et. al. \cite{schulman2017equivalence} prove that soft Q-learning is actually equivalent to policy gradient methods with entropy regularization, which we present in Section \ref{gradient}. The policy function that soft Q-learning is updating is the softmax of scaled Q values,
\begin{equation}\label{pitheta}
\pi_\theta(a|s) = \frac{\exp{Q_\theta(s,a)}/\epsilon}{\sum_{a'}\exp{Q_\theta(s,a')}/\epsilon}.
\end{equation}

As Schulman et al. proved,

\begin{equation}\label{equv}
\begin{split}
g^{soft-Q}=&\nabla_\theta\mathbb{E}_{s_t,a_t,t}\left.\left[\frac{1}{2}\left(Q_\theta(s_t,a_t)-y_t \right)^2\right] \right|_{\pi=\pi_\theta}\\
=&\mathbb{E}_{s_t,a_t,t}\left.\left[-\epsilon g_{\theta, t}+\nabla_\theta\frac{1}{2}\delta_V^2\right]\right|_{\pi=\pi_\theta},
\end{split}
\end{equation}
where
\begin{gather*}
g_{\theta, t} = Q_\theta(s_t,a_t)\nabla_\theta\log\pi_\theta(a_t|s_t)+\epsilon\nabla_\theta h^{\pi_\theta}_{s_t},\\
\delta_V = V_\theta(s_t)-V_{target}.
\end{gather*}

The result shows that the gradient of soft Q-learning loss has the form of a combination of a policy gradient with entropy regularization and a value function update.

\subsection{Undiscounted Policy Evaluation and Policy Gradient with Entropy Regularization}
\label{gradient}

Policy gradient methods are based on policy evaluations, and the policy gradient with entropy in Equation \ref{equv} is based on policy evaluation with entropy. 
Note that the derivation in the following sections is based on undiscounted policy evaluation ($\gamma=1$) for simplicity; the result can be modified to a $\gamma$-discounted version (which we refer to as biased policy estimation) by inserting $\gamma$ factors.

Here we give a brief description of undiscounted policy evaluation. 
Schulman et al. give a general form of policy gradient with undiscounted policy evaluation in \cite{schulman2015high}, and give a policy evaluation with entropy (or KL divergence), then we have undiscounted policy evaluation with entropy

\begin{equation}\label{unbiaseval}
\eta(\pi) = \mathbb{E}_{\tau\sim \pi}\left[\sum_{t=0}^{T\rightarrow\infty}{r^\pi_t}\right],
\end{equation}
the undiscounted Q function with entropy

\begin{equation}\label{unbiasq}
Q^\pi(s_t,a_t) = \mathbb{E}_{\tau_{t+1}, s_{t+1}\sim \pi}\left[r_t+\sum_{l=1}^{T\rightarrow\infty}{r^\pi_{t+l}}\right],
\end{equation}
and the undiscounted policy gradient with entropy regularization $g=\nabla_{\theta}\eta(\pi_\theta)$ is

\begin{equation}\label{unbiasgrad}
\begin{split}
g = \mathbb{E}_{\tau\sim\pi_\theta}\left[\sum_{t=0}^{\infty}Q^{\pi_\theta}(s_t, a_t)\nabla_\theta\log \pi_\theta(a_t|s_t)+\epsilon \nabla_\theta h^{\pi_\theta}_{t}\right].
\end{split}
\end{equation}

In soft Q-learning, the biased $\gamma$-discounted estimation of Q function is defined in Equation (\ref{biasq}). With this estimation for the Q function in the policy gradient in Equation (\ref{unbiasgrad}), the policy gradient is the same as the one in Equation (\ref{equv}).

\section{Learning from Demonstration with Imperfect Data}
Zhang and Ma \cite{zhang2018pretraining} proposed a method to learn with imperfect demonstrations in policy gradient based actor-critic algorithms. 
As the training gradient of soft Q-learning $g^{soft-Q}$ can be equivalently represented as a combination of policy gradient and value function updating (see Equation \ref{equv}), it is natural to decouple soft Q into policy and value learning similar to actor-critic, thus extending Zhang and Ma's pretrain method to soft Q-learning.

\subsection{Learning from Imperfect Expert Demonstrations without Reward Signals}
Based on the undiscounted policy evaluation (\ref{unbiaseval}), we can calculate the evaluation of current policy with expert demonstrations, using the following theorem:

\begin{theorem}\label{theo}
	In the settings of undiscounted policy evaluation with entropy, we have policy evaluation with demonstrations:
	\begin{equation*}
	\eta(\pi) = \eta(\pi^*)-\mathbb{E}_{\tau\sim\pi^*}\left[\sum^{\infty}_{t=0}A^\pi(s_t^*,a_t^*) + \epsilon h^{\pi^*}_{s_t^*}\right].
	\end{equation*}
	\begin{equation*}
	\mathbb{E}_{\tau\sim\pi_\theta}\left[\sum_{t=0}^\infty r_t\right]
	=\mathbb{E}_{\tau\sim\pi^*}\left[\sum_{t=0}^\infty r_t\right] - \left[\mathbb{E}_{\tau\sim\pi^*}\left[\sum_{t=0}^{\infty}A^{\pi}(s_t^*,a_t^*)\right] + \mathbb{E}_{\tau\sim\pi_\theta} \left[ \sum_{t=0}^{\infty} \epsilon h^{\pi}_{s_t}\right]\right].
	\end{equation*}
	
\end{theorem}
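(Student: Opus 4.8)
The plan is to prove the identity as an entropy-regularized instance of the classical performance-difference lemma (the actor-critic version of which underlies Zhang and Ma \cite{zhang2018pretraining}), by telescoping along trajectories generated by $\pi^*$; note that optimality of $\pi^*$ is never used, so the identity in fact holds for any pair of policies. Two preparatory facts, each obtained by peeling a single term off the series in (\ref{unbiasq}), carry the whole argument. Write $V^\pi$ for the regularized value $\mathbb{E}_{a\sim\pi}Q^\pi(s,a)+\epsilon h^\pi_{s}$ — the quantity $\mathbb{E}_{a'\sim\pi}[Q^\pi(s',a')]+\epsilon h^\pi_{s'}$ that is treated as a unit inside the soft Bellman equation of the preliminaries. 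One step of the definition of $Q^\pi$ then gives the soft consistency relation $Q^\pi(s_t,a_t)=\mathbb{E}_{s_{t+1}\mid s_t,a_t}\!\left[r_t+V^\pi(s_{t+1})\right]$, and hence $A^\pi(s_t,a_t)=\mathbb{E}_{s_{t+1}\mid s_t,a_t}\!\left[r_t+V^\pi(s_{t+1})\right]-V^\pi(s_t)$. The same unrolling at $t=0$ together with the definition (\ref{unbiaseval}) of $\eta$ shows $\eta(\pi)=\mathbb{E}_{s_0}\!\left[V^\pi(s_0)\right]$: the regularized return equals the expected value of the start state, the bonus $\epsilon h^\pi_{s_0}$ carried inside $V^\pi(s_0)$ supplying exactly the $t=0$ entropy term that $\mathbb{E}_{a_0\sim\pi}Q^\pi(s_0,a_0)$ lacks.

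With these in hand the computation is short. Substitute the expression for $A^\pi(s_t^*,a_t^*)$ into $\mathbb{E}_{\tau\sim\pi^*}\big[\sum_{t\ge 0}(A^\pi(s_t^*,a_t^*)+\epsilon h^{\pi^*}_{s_t^*})\big]$ and pull the one-step expectation inside the trajectory expectation; each summand collapses to $r_t+\epsilon h^{\pi^*}_{s_t^*}+\big(V^\pi(s_{t+1}^*)-V^\pi(s_t^*)\big)$, with no residual $h^\pi$ term left behind precisely because $A^\pi$ is formed from the regularized value $V^\pi$. Summing over $t$, the pair $r_t+\epsilon h^{\pi^*}_{s_t^*}=r^{\pi^*}_t$ sums to $\eta(\pi^*)$ by (\ref{unbiaseval}), while the value differences telescope to $-V^\pi(s_0^*)$, whose expectation is $-\eta(\pi)$ by the second preparatory fact. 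Thus $\mathbb{E}_{\tau\sim\pi^*}\big[\sum_{t}(A^\pi(s_t^*,a_t^*)+\epsilon h^{\pi^*}_{s_t^*})\big]=\eta(\pi^*)-\eta(\pi)$, which is the first displayed equation after rearrangement. The second displayed equation is merely a re-expression of this with $\pi=\pi_\theta$: expand $\eta(\pi^*)$ and $\eta(\pi_\theta)$ through their definitions as $\mathbb{E}[\sum_t(r_t+\epsilon h_{s_t})]$, and cancel the term $\epsilon\,\mathbb{E}_{\tau\sim\pi^*}[\sum_t h^{\pi^*}_{s_t^*}]$ common to both sides.

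I expect the main obstacle to be the entropy bookkeeping rather than anything structural. One has to keep straight which entropy ($\pi$ versus $\pi^*$) attaches to which state, reconcile the start-state bonus inside $V^\pi(s_0)$ with the index shift that turns $\sum_t h^\pi_{s_{t+1}^*}$ into $\sum_{t}h^\pi_{s_t^*}$ minus the $t=0$ boundary term, and — the step that makes the statement come out exactly as written — take both $A^\pi$ and $V^\pi$ in the regularized sense, so that the telescope closes without a leftover entropy-of-$\pi$ term. A secondary, purely technical point, inherited from the undiscounted ($\gamma=1$) setting of Section~\ref{gradient}, is to justify that the infinite sums converge and that $V^\pi$ vanishes along the horizon (e.g.\ assuming each episode terminates in an absorbing state of zero value), so that interchanging the expectation with the infinite sum and collapsing the telescope are legitimate; reinstating the $\gamma$ factors for the biased ($\gamma$-discounted) version makes this concern moot.
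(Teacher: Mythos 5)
Your proposal is correct and follows essentially the same route as the paper's own proof: the soft consistency relation $A^\pi(s_t,a_t)=\mathbb{E}[r_t+V^\pi(s_{t+1})]-V^\pi(s_t)$, a telescoping sum along $\pi^*$-trajectories collapsing to $-V^\pi(s_0^*)$, the identification $\eta(\pi)=\mathbb{E}_{s_0}[V^\pi(s_0)]$, and cancellation of the common $\epsilon\,\mathbb{E}_{\tau\sim\pi^*}[\sum_t h^{\pi^*}_{s_t^*}]$ term for the second display. Your explicit remark that $V^\pi$ must be taken in the regularized sense (including $\epsilon h^\pi_s$) for the telescope to close is a point the paper's proof uses implicitly despite its preliminaries defining $V^\pi(s)=\mathbb{E}_{a\sim\pi}Q^\pi(s,a)$ without that term, so you have if anything been more careful.
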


Proof of Theorem \ref{theo} is presented in Appendix \ref{proof:th1}.

Note that $\eta(\pi^*)$ and $\mathbb{E}_{\tau\sim\pi^*}\left[\sum_{t=0}^\infty r_t\right]$ are constant when back-propagating into $\pi$ and Q. Hence Theorem \ref{theo} can be used to evaluate policy,
calculate policy gradient and update value function.

\subsection{Decoupling Soft Q-learning into Policy and Value Learning}


Here we propose a method to decouple policy function and Q function in policy evaluation defined in Theorem \ref{theo}. The rationale for the decoupling is given at first, and then a detailed description of how it is done is presented. 

Since soft Q-learning is a value based RL method, the policy function is implicit in Q function (Equation \ref{pitheta}) and policy updating is actually accomplished by updating Q function. However, directly propagating $\eta(\pi)$ into Q function can be problematic because this form of policy is optimal only if Q is accurate, but the assumption is not true especially in the initial training phase. In fact, directly propagating $\eta(\pi)$ into Q value is equivalent to BC method, which is proved in section \ref{discussion}.

The aim of our decoupling method is to update policy and Q function separately.
By stopping the gradient of policy or Q function, we can update the other by calculating its partial derivative to the corresponding loss function. 

Suppose Q function of soft-Q learning is neural network with parameters $\theta$. With definitions in Section \ref{gradient}, we have $A_\theta(s,a)=Q_\theta(s,a)-V_\theta(s)$, and we can obtain from definitions that
\begin{equation*}
V_\theta(s) = \sum_{a'}\pi_\theta(a'|s)Q_\theta(s,a')+\epsilon h^{\pi_\theta}_s.
\end{equation*}
Then we can give a formulation of advantage function that is only denoted by policy function and Q function:

\begin{equation}\label{adv_pq}
A_\theta(s,a) = Q_\theta(s,a)-\sum_{a'}\pi_\theta(a'|s)Q_\theta(s,a')-\epsilon h^{\pi_\theta}_s.
\end{equation}

Then by regarding one as a constant (which we denote by $\overline{x}$), and update the other with its gradient. In this way, we decouple the policy function and Q function in the training process based on expert demonstrations. 

Similar to policy gradient, loss of $\pi$ is defined as 
\begin{equation*}
L_\pi=\mathbb{E}_{\tau\sim\pi^*}\left[\sum_{t=0}^{\infty}A_\theta(s_t^*,a_t^*)\right] + \mathbb{E}_{\tau\sim\pi_\theta} \left[ \sum_{t=0}^{\infty} \epsilon h^{\pi_\theta}_{s_t}\right].
\end{equation*}  
Because soft Q-learning is equivalent to actor-critic algorithms, by stopping the gradient of the Q function, we can calculate the policy gradient based on expert demonstrations:

\begin{equation*}
\begin{split}
g_\pi &= \left.-\nabla_{\theta}L_\pi\right|_{stop-Q}\\
&=\mathbb{E}_{\tau^*\sim\pi^*,\tau\sim\pi, a'\sim \pi_{\theta}(a'|s_t^*)}\sum_{t=0}^\infty \left[\overline{Q_{\theta}(s_t^*,a')}\nabla_{\theta}\log \pi_{\theta}(a'|s_t^*)+\epsilon \nabla_{\theta}\left( h_{s_t^*}^{\pi_{\theta}} -h_{s_t}^{\pi_{\theta}}\right)\right].
\end{split}
\end{equation*}
With this policy gradient, we can update the policy function defined in Equation (\ref{pitheta}).

In Equation (\ref{equv}), soft Q-learning also updates value function besides policy function. This means that the policy gradient method that soft Q-learning is equivalent to has a value function that is not accurate enough, and needs to learn from simulated data.

Hence it is necessary to update Q function with expert demonstrations to increase the accuracy of the value function. Similar to \cite{zhang2018pretraining}, we constrain expert policies $\pi^*$ to \emph{perform better} than $\pi_\theta(\cdot|\cdot)$ defined in Equation (\ref{pitheta}). Although we already have a policy evaluation function $\eta(\pi)$, it is the target of updating policy function of soft Q-learning, and experts may not outperform $\pi_\theta$ on $\eta(\pi)$. The reason is that experts may not be good at exploring the state-action space, but $\eta(\pi)$ encourages it. 
What expert demonstrations have in common is their relatively high returns, although the returns are not provided numerically. 
As a reduction, in this paper we define $\pi^*$ \emph{perform better} than $\pi_\theta$ by

\begin{equation*}
\mathbb{E}_{\tau\sim\pi^*}\left[\sum_{t=0}^\infty r_t\right]\geqslant \mathbb{E}_{\tau\sim\pi_\theta}\left[\sum_{t=0}^\infty r_t\right],
\end{equation*}
which demonstrates the common feature of expert demonstrations. As we prove in Appendix \ref{proof:th1}, this constraint is equivalent to the following one:

\begin{equation}\label{constraint}
\begin{split}
\mathbb{E}_{\tau\sim\pi^*}\left[\sum_{t=0}^{\infty}A_\theta(s_t^*,a_t^*)\right] + \mathbb{E}_{\tau\sim\pi_\theta} \left[ \sum_{t=0}^{\infty} \epsilon h^{\pi_\theta}_{s_t}\right]\geqslant 0.
\end{split}
\end{equation}

If the constraint (\ref{constraint}) is satisfied, by definition the Q function is accurate enough to believe that expert demonstrations perform better. Therefore we can update the Q function by forcing it to satisfy the constraint. The loss for Q function is 
\begin{equation*}
L_Q = \left[-\mathbb{E}_{\tau\sim\pi^*}\left[\sum_{t=0}^{\infty}A_\theta(s_t^*,a_t^*)\right] - \mathbb{E}_{\tau\sim\pi_\theta} \left[ \sum_{t=0}^{\infty} \epsilon h^{\pi_\theta}_{s_t}\right]\right]_+ ,
\end{equation*}
where $[x]_+=\max (0,x)$. By stopping the gradient of the policy $\pi_\theta$, we can calculate the gradient with expert demonstrations

\begin{equation}\label{gradq}
\begin{split}
g_Q &= \left.-\nabla_{\theta}L_Q\right|_{stop-\pi}\\
&= \left.\nabla_{\theta}\mathbb{E}_{\tau\sim\pi^*}\sum_{t=0}^\infty \left[Q_{\theta}(s_t^*, a_t^*) - \sum_{a'}\overline{\pi_{\theta}(a'|s_t^*)}Q_{\theta}(s_t^*, a')\right] \right|_{\alpha < 0} ,
\end{split}
\end{equation}
where 
\begin{equation*}
 \alpha=\mathbb{E}_{\tau\sim\pi^*}\sum_{t=0}^\infty r_t^* - \mathbb{E}_{\tau\sim\pi}\sum_{t=0}^{\infty}r_t .
\end{equation*}

Details of our calculation are presented in Appendix \ref{proof:g}. 
\subsection{Combining Soft Q-learning with Expert Demonstrations}
In our pretrain method, both policy gradient and Q update are based on a relatively accurate estimation of Q function for $\pi_\theta$. However, since expert demonstrations do not contain reward signals, our method has to learn together with online simulated data with reward, using soft Q-learning. So the gradient of our method is 

\begin{equation}\label{pregrad}
g^{pre} = g^{soft-Q} + \lambda(\epsilon g_\pi+g_Q),
\end{equation}
where $\lambda$ is the weighting parameter.

After pretraining with the gradient (\ref{pregrad}), we continue the training process with soft Q-learning, using only the replay buffer of online trajectories. The reason is that the experts do not perform as well as the potential of soft Q-learning and that our purpose of introducing expert demonstrations is warming up soft Q-learning and allowing for further performance improvement. 

The full version of our method is illustrated in Algorithm \ref{alg}.

\begin{algorithm}[tb]
	\caption{Pretraining Soft Q-learning with Demonstrations}
	\label{alg}
	\begin{algorithmic}
		\STATE {\bfseries Initialize:} network parameter $\theta$, pretraining step $N_p$, total training step $N$, expert buffer $D^*$, replay buffer $D$.
		\FOR{$t=0$ {\bfseries to} $N-1$}
		\STATE Simulate for a few steps with $\theta$, after each step, add $\{s_t,a_t\}$ to $D$.
		\IF{$t < N_p$}
		\STATE Take a demonstration batch from $D^*$ and a replay batch from $D$.
		\STATE Update $\theta$ with the demonstration batch and replay batch, using gradient (\ref{pregrad})
		\ELSE
		\STATE Take a replay batch from $D$.
		\STATE Update $\theta$ with the replay batch, using gradient (\ref{equv}) 
		\ENDIF
		\ENDFOR
	\end{algorithmic}
\end{algorithm}

\section{Discussion}
\label{discussion}
One of the main contributions of our method is decoupling the policy function and Q function in policy evaluation with expert demonstrations. By stopping the gradient of one factor, we can update the other. Note that soft Q-learning is a value based RL algorithm, and only the Q function is explicit, we regard the method as an actor-critic method using a double headed neural network. The input of the double headed neural network is the observation, one output is the Q function of all actions, and the other output is the policy, i.e., the probability distribution of actions. Both of the outputs share all the trainable parameters, and the policy is calculated with Equation (\ref{pitheta}). 

Since we can calculate the policy evaluation using the two heads, we can train the two heads respectively by stopping gradients. It is interesting to note that if we do not decouple the policy function and Q function, the ``policy gradient based on demonstrations'' is actually equivalent to the cross-entropy gradient for classification tasks:
\begin{equation*}
\begin{split}
-\nabla_\theta\eta(\pi_\theta) &= \nabla_\theta\mathbb{E}_{\tau\sim\pi^*}\left[\sum^{\infty}_{t=0} A^{\pi_\theta}(s_t^*,a_t^*)\right]\\
&=\nabla_{\theta}\mathbb{E}_{\tau\sim\pi^*}\left[\sum_{t=0}^\infty \left[Q_{\theta}(s_t^*, a_t^*) - \sum_{a'}\pi_{\theta}(a'|s_t^*)Q_{\theta}(s_t^*, a') + \epsilon \sum_{a'}\pi_{\theta}(a'|s_t^*)\log \pi_{\theta}(a'|s_t^*)\right]\right] \\
&=\epsilon \nabla_\theta\mathbb{E}_{\tau\sim\pi^*}\left[\sum^{\infty}_{t=0} \log \pi_{\theta}(a_t^*|s_t^*)\right], 
\end{split}
\end{equation*}
which is also the gradient of BC methods.

The derivation holds true provided that $p(a|s)$ can be always substituted with the optimal softmax policy $\frac{\exp(Q_{\theta}(s_t^*,a')/\epsilon)}{\sum_{a}\exp(Q_{\theta}(s_t^*,a)/\epsilon)}$. However, the optimal property of softmax distribution is unidirectional: a policy mimicking softmax distribution is guaranteed to be optimal in the context of any given Q estimation, but in the opposite direction, a Q value estimation whose softmax $\pi_{\theta}(a|s)$ mimicking an expert policy is not guaranteed to be a more accurate estimation in the environment, which limits the agent's ability of generalizing to trajectories out of expert demonstrations. 

Above is the reason why we decouple the policy function and Q function in policy evaluation with demonstrations. Experiments also show that our method outperforms Behavior Cloning methods on the pretraining task.

We train $Q_\theta$ with the constraint that $\pi^*$ performs better than $\pi_\theta$. The method is actually a modified version of IRL. Note that 
\begin{equation*}
\pi_\theta(a|s)=\argmax_\pi \left({\mathbb{E}_{a'\sim \pi}\left[Q^{\pi_\theta}(s,a')\right]}+\epsilon h^\pi_{s}\right),
\end{equation*}
then we can rewrite the gradient (\ref{gradq}) for Q as 

\begin{equation*}
g_Q =\nabla_\theta \mathbb{E}_{\tau\sim\pi^*}\Bigg[\sum_{t=0}^{\infty} \Big[ Q_\theta(s_t^*,a_t^*)-
\max_\pi \left({\mathbb{E}_{a'\sim \pi}\left[Q_\theta(s^*_t,a')\right]}+\epsilon h^\pi_{s^*_t}\right)\Big]\Bigg]\Bigg|_{\alpha< 0},
\end{equation*}
which is a soft Q-learning version of IRL, where $\alpha=\mathbb{E}_{\tau\sim\pi^*}\left[\sum_{t=0}^{\infty}A_\theta(s_t^*,a_t^*)\right] + \mathbb{E}_{\tau\sim\pi_\theta} \left[ \sum_{t=0}^{\infty} \epsilon h^{\pi_\theta}_{s_t}\right]$. Here the back-propagation of $\pi_\theta$ is stopped, and we ignore the terms that have $\pi_\theta$ in the gradient. And we set the condition $\alpha< 0$ for this IRL process, because we only want the Q function estimator to satisfy constraint (\ref{constraint}). This process is quite similar to DQfD \cite{hester2017deep}, except that this gradient have a constraint and entropy term, and the margin used in DQfD is abandoned.

The constraint $\alpha<0$ is non-heuristic for value based RL algorithms. It is based on our understanding that soft Q-learning is equivalent to a policy gradient method and that the implicit policy for this method is $\pi_\theta(a|s)$ defined in Equation (\ref{pitheta}). The constraint is obtained from constraint (\ref{constraint}) using the chain rule and is applied on the expectation of value functions, instead of the value function on a single step, which is different from the ``Q-filter'' that Nair et al. proposed \cite{nair2018overcoming}.

\section{Experiments}
\subsection{Experimental Setup}
Our method is tested on various environments from Atari 2600 video games. These environments provide images as observations and have discrete action spaces. Example screenshots are demonstrated in Figure \ref{fig:environments}. In this paper, we use down-sampled $84\times84$ gray scale images from video screens as states for the agent, and clipped reward signal from the environment for training. In order to demonstrate the performance of our method clearly, we use no-clipped reward when reporting experimental results. 

\begin{figure}[tb]
	\centering
	\includegraphics[width=0.24\textwidth]{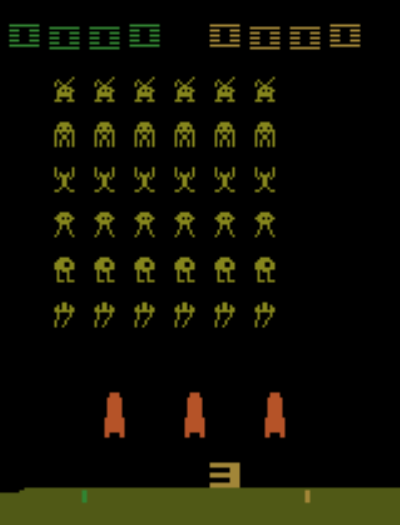}
	\includegraphics[width=0.24\textwidth]{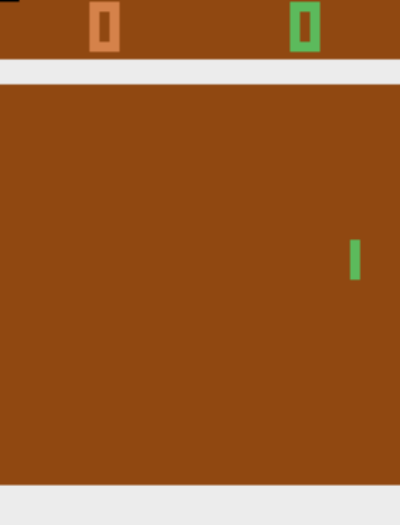}
	\includegraphics[width=0.24\textwidth]{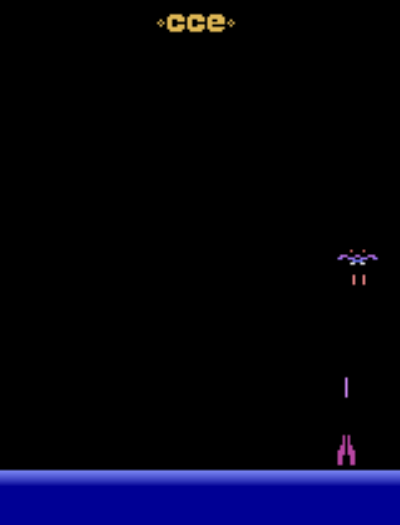}
	\includegraphics[width=0.24\textwidth]{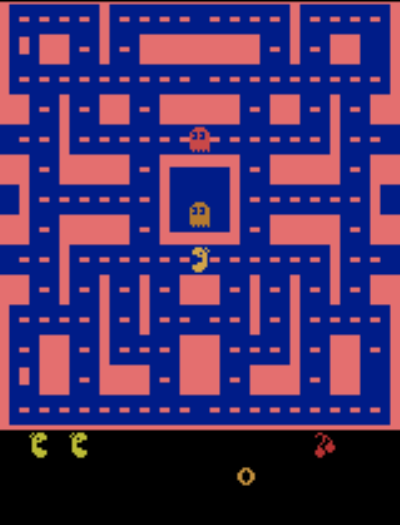}
	\caption{Example screenshots of Atari simulation environments that we experiment on. From left to right are: SpaceInvaders, Pong, DemonAttack, MsPacman.}
	\label{fig:environments}
\end{figure}

The datasets of expert demonstrations used in the experiments contain trajectories of $\{s_t, a_t\}$. Each dataset for one environment contains around 40000 time steps. The datasets are generated from an agent trained with PPO \cite{schulman2017proximal}, though our method is not limited to using demonstrations rolled out by specific algorithms. All the methods in experiments pretrain with the same demonstration datasets for an equitable comparison.

The agent of our proposed method uses a 5-layer neural network, which contains 3 convolutional layers and 2 fully connected layers. The inputs of the network are batches of images, and the outputs are Q values of each action. 

In this paper, we compare our method with BC and DQfD among all the tasks. Due to different learning speed of different games, the pretraining steps $N_p$ vary among tasks. Other hyper parameters including replay buffer size, expert buffer size and batch size are identical in all tasks. Our parameters setting is listed in Appendix \ref{appendix:param}.

\subsection{Training Process}
The agent is trained with the process shown in Algorithm \ref{alg}. During initialization, the replay buffer is filled with roll outs from a random policy. Afterwards, the agent enters pretrain phase and learns from both expert demonstrations and online simulated data. Every $\textrm{learning\_frequency}$ steps, the agent samples the same number of data from expert demonstrations and replay buffer, and updates Q value using gradient \ref{pregrad}.

After pretrain, the agent continues training via a normal soft Q-learning, sampling only from replay buffer. In all of the four experiment environments, we train the agent for a total of $5\times10^6$ time steps.

Online trajectories are rolled out according to $\pi_{\theta}(a|s)$ (Equation \ref{pitheta}) and stored in replay buffer along with the training process as a common practice to mitigate distributional shift by online data augmentation.  

\subsection{Experimental Results}
The training curves demonstrated in Figure \ref{fig:experiment} show the effectiveness of our pretrain method. 

\paragraph{Effect of pretrain.} 
Even if the performance of demonstrations is imperfect (see dashed horizontal line in the first three plots), our method still manages to exceed demonstration level via online updating after pretrain phase.

\paragraph{Decoupling policy and value function.}The effect of decoupling soft Q-learning into policy and value can be seen by comparing BC method with ours, since we have proved in section \ref{discussion} that BC is equivalent to directly training with our loss function in a normal soft Q-learning architecture. The results show that decoupled version is more adaptive to further online updating after pretrain. It is interesting to observe that BC goes through a sharp drop and restart when pretrain phase finishes, while our method enjoys a smoother transition from pretrain to normal soft Q-learning. This can be explained as BC aims to mimic only the policy of demonstrations rather than carefully adjust policy and value estimation as we have done.

\paragraph{Non-heuristic loss design.} DQfD learns from demonstrations by forcing a margin between the Q value of the expert action and other actions, and our method gets rid of this heuristic margin design. Although DQfD learns super fast during pretrain, our method again is more applicable to learning after pretrain because our loss functions are theoretically derived and are not intuitively manipulating value functions.

\begin{figure}
	\centering
	\includegraphics[width=1\textwidth]{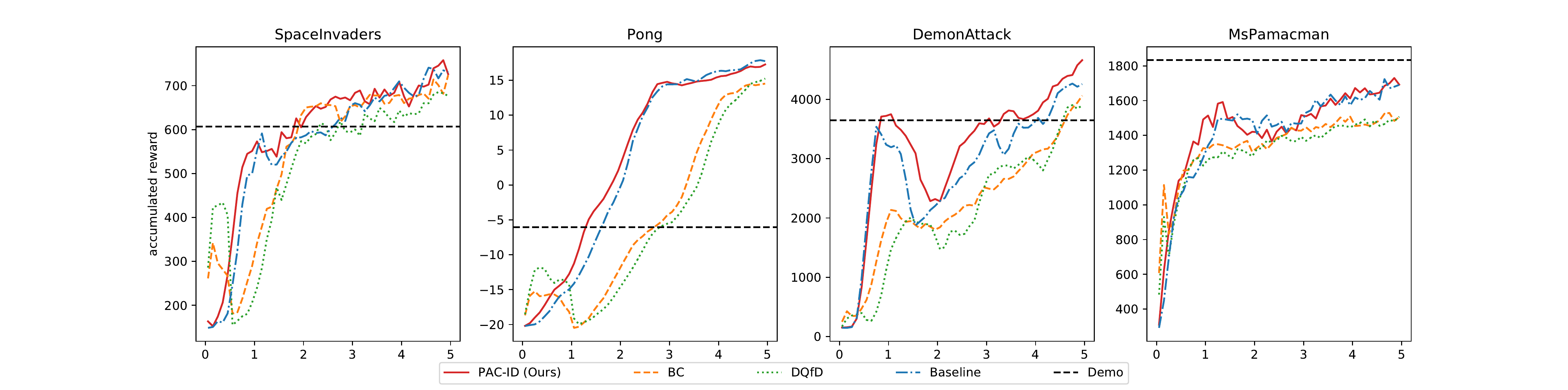}
	\caption{experiments on atari 2600}
	\label{fig:experiment}
\end{figure}

\section{Conclusion}
In this paper, we propose a method that pretrains soft Q-learning with imperfect demonstrations. The method is based on the equivalence between soft Q-learning and policy gradient methods. We prove that for the undiscounted policy evaluation with entropy, we can present the policy evaluation with reward-missing expert demonstrations. Then with this result, we introduce expert demonstrations to soft Q-learning. The method decouples the Q function and the policy function in the policy evaluation and updates both functions respectively by stopping the gradient back-propagation one at a time. In the experiments, we demonstrate that on various tasks, our method outperforms BC and DQfD concerning pretraining soft Q-learning.

In this paper, the proposed method is based on the entropy regularizer and is not applicable to other value based RL algorithms. We hope to solve this problem in our future work. Also, as our method is a single-step gradient based method, we can extend our work with methods such as trust region methods and multi-step learning. We also leave these extensions in our future work.

\section*{Acknowledgments}
This work was supported by the National Key Basic Research
Program of China (No. 2016YFB0100900) and National Natural Science Foundation of China (No. 61773231).

\newpage
\section*{Appendix A}
\setcounter{section}{0}
\setcounter{theorem}{0}
\renewcommand\thesection{A.\arabic{section}} 

\section{Learning from Imperfect Demonstrations without Reward}\label{proof:th1}

\begin{theorem}
	In the settings of undiscounted policy evaluation with entropy, we have
	\begin{equation}
	\eta(\pi^*)-\eta(\pi) = \mathbb{E}_{\tau\sim\pi^*}\left[\sum^{\infty}_{t=0}A^\pi(s_t^*,a_t^*) + \epsilon h^{\pi^*}_{s_t^*}\right].
	\end{equation}
	\begin{equation}
	\mathbb{E}_{\tau\sim\pi_\theta}\left[\sum_{t=0}^\infty r_t\right]
	=\mathbb{E}_{\tau\sim\pi^*}\left[\sum_{t=0}^\infty r_t\right] - \left[\mathbb{E}_{\tau\sim\pi^*}\left[\sum_{t=0}^{\infty}A^{\pi}(s_t^*,a_t^*)\right] + \mathbb{E}_{\tau\sim\pi_\theta} \left[ \sum_{t=0}^{\infty} \epsilon h^{\pi}_{s_t}\right]\right].
	\end{equation}
	
\end{theorem}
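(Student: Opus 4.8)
The plan is to read the first identity as an entropy-regularized \emph{performance difference lemma} (in the spirit of Kakade--Langford) and prove it by telescoping the value function along trajectories rolled out by $\pi^*$; the second identity is then just an algebraic rearrangement of the first.

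First I would record two elementary consequences of the definitions in Section~\ref{gradient}. (i) A one-step expansion of the undiscounted $Q$ of (\ref{unbiasq}): peeling off $r_t$ and re-indexing the tail $\sum_{l\ge 1}r^\pi_{t+l}$ so that it starts at time $t+1$ gives $Q^\pi(s_t,a_t)=\mathbb{E}_{s_{t+1}}\!\left[\,r_t+V^\pi(s_{t+1})\mid s_t,a_t\,\right]$, where $V^\pi(s)=\mathbb{E}_{a\sim\pi}Q^\pi(s,a)+\epsilon h^\pi_s$ is the entropy-inclusive value used in the decoupling step (this is the convention under which the stated identity is exact). (ii) From (\ref{unbiaseval}) and the same definition of $V^\pi$, $\eta(\pi)=\mathbb{E}_{s_0}\!\left[V^\pi(s_0)\right]$; since the initial-state distribution is fixed and policy-independent, this also equals $\mathbb{E}_{s_0\sim\pi^*}\!\left[V^\pi(s_0^*)\right]$.

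Next I would substitute $A^\pi(s_t^*,a_t^*)=Q^\pi(s_t^*,a_t^*)-V^\pi(s_t^*)$ together with (i) into the right-hand side of the first identity. Taking the outer expectation over $\tau\sim\pi^*$ and using the tower property to absorb the conditional expectation over $s_{t+1}^*$ (the dynamics do not depend on the policy), each summand becomes $r_t^*+V^\pi(s_{t+1}^*)-V^\pi(s_t^*)$. The differences of $V^\pi$ telescope over $t$, so $\mathbb{E}_{\tau\sim\pi^*}\!\left[\sum_{t\ge0}A^\pi(s_t^*,a_t^*)\right]=\mathbb{E}_{\tau\sim\pi^*}\!\left[\sum_{t\ge0}r_t^*\right]-\mathbb{E}\!\left[V^\pi(s_0^*)\right]=\mathbb{E}_{\tau\sim\pi^*}\!\left[\sum_{t\ge0}r_t^*\right]-\eta(\pi)$ by (ii). Adding $\epsilon\,\mathbb{E}_{\tau\sim\pi^*}\!\left[\sum_{t\ge0}h^{\pi^*}_{s_t^*}\right]$ to both sides and recalling $r^{\pi^*}_t=r_t^*+\epsilon h^{\pi^*}_{s_t^*}$ turns the right side into $\eta(\pi^*)-\eta(\pi)$, which is the first claim. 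For the second claim I would expand $\eta(\pi^*)$ and $\eta(\pi_\theta)$ into their reward-plus-entropy pieces inside the first identity; the $\epsilon\,\mathbb{E}_{\tau\sim\pi^*}[\sum h^{\pi^*}_{s_t^*}]$ terms cancel, and solving for $\mathbb{E}_{\tau\sim\pi_\theta}[\sum_t r_t]$ reproduces the displayed formula verbatim.

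The one genuinely delicate point is the undiscounted ($\gamma=1$) telescoping: I must justify that $\sum_{t=0}^{T-1}\bigl(V^\pi(s_{t+1}^*)-V^\pi(s_t^*)\bigr)=V^\pi(s_T^*)-V^\pi(s_0^*)$ has a limit as $T\to\infty$ with $\mathbb{E}[V^\pi(s_T^*)]\to 0$, and that the infinite sum and the expectation may be interchanged. This rests on a regularity assumption on the MDP that is already implicit in the ``undiscounted policy evaluation'' setting of Section~\ref{gradient} (e.g.\ proper policies with an absorbing zero-reward terminal state, or summable regularized returns), and it is exactly the step that the paper notes can be sidestepped by reinstating $\gamma$ factors. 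The remainder is routine advantage-telescoping bookkeeping plus tracking the entropy bonus terms.
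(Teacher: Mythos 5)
Your proposal is correct and follows essentially the same route as the paper's own proof: expand $A^\pi(s_t^*,a_t^*)$ via the one-step identity $A^\pi(s_t,a_t)=\mathbb{E}\left[r_t+V^\pi(s_{t+1})-V^\pi(s_t)\right]$, telescope the value differences along $\tau\sim\pi^*$ to leave $-\mathbb{E}\left[V^\pi(s_0^*)\right]=-\eta(\pi)$, add and subtract the $\epsilon h^{\pi^*}$ terms to form $\eta(\pi^*)$, and obtain the second identity by algebraic rearrangement using $\eta(\pi)=\mathbb{E}_{\tau\sim\pi}\left[\sum_t r_t+\epsilon h^\pi_{s_t}\right]$. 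Your additional care about the entropy-inclusive value convention and the interchange/limit justification for the undiscounted telescoping addresses points the paper leaves implicit, but does not change the argument.
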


\begin{proof}
	
	From definition of advantage function, we have
	
	\begin{equation}\label{aaa}
	A^\pi(s_t,a_t) = \mathbb{E}_{r_t,s_{t+1}|s_t,a_t}\left[r_t+V^\pi(s_{t+1})-V^\pi(s_t)\right].
	\end{equation}
	
	hence 
	\begin{equation*}
	\begin{split}
	&\mathbb{E}_{\tau\sim\pi^*}\left[\sum_{t=0}^{\infty}A^\pi(s_t^*,a_t^*)\right]\\
	=&\mathbb{E}_{\tau\sim\pi^*}\left[\sum_{t=0}^{\infty}r_t+V^\pi(s_{t+1}^*)-V^\pi(s_t^*)\right]\\
	=&\mathbb{E}_{\tau\sim\pi^*}\left[-V^\pi(s_0^*)+\sum_{t=0}^{\infty}r_t\right]\\
	=&-\mathbb{E}_{s_0} \left[ V^\pi(s_0^*) \right] + \mathbb{E}_{\tau\sim\pi^*} \left[ \sum_{t=0}^{\infty} r_t + \epsilon h^{\pi^*}_{s_t^*}\right] - \mathbb{E}_{\tau\sim\pi^*} \left[ \sum_{t=0}^{\infty} \epsilon h^{\pi^*}_{s_t^*}\right]\\
	=&-\eta(\pi) + \eta(\pi^*) - \mathbb{E}_{\tau\sim\pi^*} \left[ \sum_{t=0}^{\infty} \epsilon h^{\pi^*}_{s_t^*}\right].
	\end{split}
	\end{equation*}
	
	Rearrange the result, the theorem holds.
\end{proof}

Similar to \cite{zhang2018pretraining}, we constrain expert policies $\pi^*$ \emph{perform better} than $\pi_\theta(\cdot|\cdot)$ defined in equation (\ref{pitheta}). In this paper, we define $\pi^*$ \emph{perform better} than $\pi_\theta$ by

\begin{equation}
\mathbb{E}_{\tau\sim\pi^*}\left[\sum_{t=0}^\infty r_t\right]\geqslant \mathbb{E}_{\tau\sim\pi_\theta}\left[\sum_{t=0}^\infty r_t\right],
\end{equation}

and by definition and Theorem \ref{theo}, we have 

\begin{equation*}
\begin{split}
&\mathbb{E}_{\tau\sim\pi^*}\left[\sum_{t=0}^\infty r_t\right] -  \mathbb{E}_{\tau\sim\pi_\theta}\left[\sum_{t=0}^\infty r_t\right]\\
=&-\eta(\pi_\theta) + \eta(\pi^*) - \mathbb{E}_{\tau\sim\pi^*} \left[ \sum_{t=0}^{\infty} \epsilon h^{\pi^*}_{s_t^*}\right] + \mathbb{E}_{\tau\sim\pi_\theta} \left[ \sum_{t=0}^{\infty} \epsilon h^{\pi_\theta}_{s_t}\right] \\
=&\mathbb{E}_{\tau\sim\pi^*}\left[\sum_{t=0}^{\infty}A_\theta(s_t^*,a_t^*)\right] + \mathbb{E}_{\tau\sim\pi_\theta} \left[ \sum_{t=0}^{\infty} \epsilon h^{\pi_\theta}_{s_t}\right]\geqslant 0.
\end{split}
\end{equation*}

Hence the loss function of $\pi_\theta$ is 

\begin{equation}
L_\pi = \mathbb{E}_{\tau\sim\pi^*}\left[\sum_{t=0}^{\infty}A_\theta(s_t^*,a_t^*)\right] + \mathbb{E}_{\tau\sim\pi_\theta} \left[ \sum_{t=0}^{\infty} \epsilon h^{\pi_\theta}_{s_t}\right],
\end{equation}

and the loss function for Q function is

\begin{equation}
L_Q = \left[-\mathbb{E}_{\tau\sim\pi^*}\left[\sum_{t=0}^{\infty}A_\theta(s_t^*,a_t^*)\right] - \mathbb{E}_{\tau\sim\pi_\theta} \left[ \sum_{t=0}^{\infty} \epsilon h^{\pi_\theta}_{s_t}\right]\right]_+,
\end{equation}

\section{Gradients of Decoupled Policy and Q Functions}\label{proof:g}
Inspired by Zhang and Ma \cite{zhang2018pretraining}, since soft Q-learning is equivalent to actor-critic algorithms, by stopping the gradient of the Q function, we calculate policy gradient corresponding to $L_\pi$ with expert demonstrations.





\begin{equation}
\begin{split}
g_\pi &=-\left.\nabla_{\theta}L_{\pi}\right|_{stop-Q} \\
&= -\left.\nabla_\theta\left[ \mathbb{E}_{\tau\sim\pi^*} \left[ \sum^{\infty}_{t=0} A_\theta(s_t^*,a_t^*)\right] + \mathbb{E}_{\tau\sim\pi}\left[\sum_{t=0}^\infty\epsilon h_{s_t}^{\pi_{\theta}}\right]\right]\right|_{stop-Q}\\
&= -\nabla_\theta \mathbb{E}_{\tau\sim\pi^*} \left[ \sum^{\infty}_{t=0} \overline{Q_\theta(s_t,a_t)}-\sum_{a'}\pi_\theta(a'|s_t)\overline{Q_\theta(s_t,a')}-\epsilon h^{\pi_\theta}_{s_t^*} \right] + \mathbb{E}_{\tau\sim\pi}\left[\sum_{t=0}^\infty \epsilon h_{s_t}^{\pi_{\theta}}\right]\\
&=\mathbb{E}_{\tau^*\sim\pi^*,\tau\sim\pi, a'\sim \pi_{\theta}(a'|s_t^*)}\sum_{t=0}^\infty \left[\overline{Q_{\theta}(s_t^*,a')}\nabla_{\theta}\log \pi_{\theta}(a'|s_t^*)+\epsilon \nabla_{\theta}\left( h_{s_t^*}^{\pi_{\theta}} -h_{s_t}^{\pi_{\theta}}\right)\right],
\end{split}
\end{equation}

where $\nabla_\theta\overline{Q_\theta(s,a)}=0$.

Based on the assumption that expert demonstration performs better than the agent in pretraining period, we can update Q function to match this assumption when the constraint (\ref{constraint}) is violated. 
By stopping the gradient of the policy $\pi_\theta$, 
we can calculate the gradient of Q function
\begin{equation}\label{eqn:gq1}
\begin{split}
g_Q &= \left.-\nabla_{\theta}L_Q\right|_{stop-\pi} \\
&=\nabla_{\theta}\left.\left[\mathbb{E}_{\tau\sim\pi^*}\left[\sum_{t=0}^{\infty}A_\theta(s_t^*,a_t^*)\right] + \mathbb{E}_{\tau\sim\pi_\theta} \left[ \sum_{t=0}^{\infty} \epsilon h^{\overline{\pi_\theta}}_{s_t}\right]\right]\right|_{\alpha < 0} \\
&= \left.\nabla_{\theta}\mathbb{E}_{\tau\sim\pi^*}\sum_{t=0}^\infty \left[Q_{\theta}(s_t^*, a_t^*) - \sum_{a'}\overline{\pi_{\theta}(a'|s_t^*)}Q_{\theta}(s_t^*, a')\right] \right|_{\alpha < 0} 
\end{split}
\end{equation}
\begin{equation}
\begin{split}
\alpha &= \mathbb{E}_{\tau\sim\pi^*}\sum_{t=0}^\infty r_t^* - \mathbb{E}_{\tau\sim\pi}\sum_{t=0}^{\infty}r_t \\
\end{split}
\end{equation}


where $\nabla_\theta \overline{\pi_\theta(a|s)} = 0$, 
and we set $\nabla_x [x]_+=0$ when $x=0$.

\section{Experiment Hyperparameters}\label{appendix:param}
\begin{table}[!h]
	\centering
	\begin{tabular}{l l}
		\hline
		Environment & $N_p$ \\
		\hline
		SpaceInvaders & $5 \times 10^5 $\\
		Pong & $1 \times 10^6$ \\
		DemonAttack & $5 \times 10^5$ \\
		MsPacman & $2 \times 10^5$ \\
		\hline		
	\end{tabular}
	\caption{Pretrain length for different environments.}
	\label{tab:pretrain_length}
\end{table}
\begin{table}[!h]
	\begin{tabular}{l l l}
		\hline
		Hyperparameter & Value & Description \\
		\hline
		minibatch size & 32 & Number of training cases the optimizer computed over. \\
		replay buffer size & 1000000 & Number of history steps the agent keeps memory of. \\
		initial learning rate & 0.0001 & Learning rate used by Adam in time steps [0, $1\times 10^6$]. \\
		final learning rate & 0.00005 & Learning rate decreases linearly to this value. \\
		replay start size & 50000 & Random steps rolled out before learning starts. \\
		learning frequency & 4 & Frequency of gradient updating. \\
		target update frequency & 10000 & Frequency of target network updating. \\	
		$\epsilon$ & 0.1 & Weighting parameter of entropy regularizer. \\
		$\lambda$ & 1.0 & Weighting parameter of pretrain loss. \\
		maximum timesteps & $5\times 10^6$ & Total training steps. \\
		\hline
	\end{tabular}
	\caption{List of hyperparameters.}
	\label{tab:hyperparameters}
\end{table}

For each of the environments in our experiment, we set different length of pretrain steps as presented in Table \ref{tab:pretrain_length}.

Other hyperparameters are identical among environments, and we list them in Table \ref{tab:hyperparameters}.
\vskip 0.2in
\bibliographystyle{theapa}
\bibliography{sample}

\end{document}